\documentclass[10pt,conference]{IEEEtran}
\IEEEoverridecommandlockouts
\usepackage{cite}
\usepackage{times}
\usepackage{soul}
\usepackage{url}
\usepackage[hidelinks]{hyperref}
\usepackage[utf8]{inputenc}
\usepackage[small]{caption}
\usepackage{graphicx}
\usepackage{amsmath}
\usepackage{amsthm}
\usepackage{booktabs}
\usepackage{listings}

\usepackage[ruled,linesnumbered]{algorithm2e}
\usepackage[most]{tcolorbox}

\usepackage{xcolor}      
\usepackage{hyperref}    
\usepackage{rotating}  
\usepackage{colortbl}
\usepackage{mathtools}
\usepackage{pgfplots}
\pgfplotsset{compat=1.18}

\usepackage{amssymb}  
\usepackage{threeparttable} 
\usepackage{booktabs}       
\newtheorem{definition}{Definition}

\usepackage{titletoc}
\usepackage{fontawesome5}
\newtheoremstyle{rqplain}
  {1pt}{0pt}{\bfseries}{0pt}{\bfseries}{.}{0.5em}
  {\thmname{#1}\thmnumber{#2}\thmnote{ (#3)}}
\theoremstyle{rqplain}
\newtheorem{researchquestion}{RQ}

\newcommand{\RQicon}{\textcolor{black}{\faQuestionCircle}\,}
\newcommand{\RRQicon}{\textcolor{black}{\faLightbulb}\,}

\makeatletter
\let\old@researchquestion\researchquestion
\def\researchquestion{%
  \def\@begintheorem##1##2[##3]{%
    \item[\hskip\labelsep\normalfont\bfseries\RQicon ##1\ ##2\if\relax\detokenize{##3}\relax\else\ (##3)\fi.]%
  }%
  \old@researchquestion
}
\makeatother

\theoremstyle{plain}

\usepackage{multirow}
\usepackage{makecell}
\usepackage{bm}
\usepackage{cleveref}
\crefname{figure}{Fig.}{Figs.}
\crefname{table}{Tab.}{Tabs.}
\crefname{equation}{Eq.}{Eqs.}
\crefname{section} {Sec.}{Secs.}
\crefname{chapter} {Ch.}{Chs.}
\crefname{appendix}{App.}{Apps.}

\definecolor{highlight}{HTML}{C8E6C9}
\definecolor{highlight2}{HTML}{DCEDC8}
\definecolor{highlight3}{HTML}{F9FBE7}

\newtcolorbox{boxK}{
    enhanced,
    colback=gray!5,    
    colframe=gray!40,
    boxrule=0.4pt,
    arc=1.5pt,
    left=5pt, right=5pt, top=3pt, bottom=3pt,
    before upper=\parindent0pt, 
}

\newcommand{\findingbox}[1]{%
\begin{boxK}
\footnotesize\textbf{\RRQicon Finding.}~#1
\end{boxK}
}

\newtheorem{proposition}{Proposition}

\hypersetup{
    colorlinks=false,          
    citebordercolor={0 1 0},   
    linkbordercolor={1 0 0},   
    urlbordercolor={0 1 1},    
    pdfborder={0 0 2}          
}

\begin{document}

\title{CIT-CAD: Constraint Intent Tree-based CAD Code Generation and Verification}

\author{\IEEEauthorblockN{Anonymous Author(s)}
}

\author{
Yali Du$^\dagger$,
Hui Sun$^\dagger$,
San-Zhuo Xi,
Ming Li\\
National Key Laboratory for Novel Software Technology,
School of Artificial Intelligence, \\Nanjing University, China \protect\\
\{duyl, sunh, lim\}@lamda.nju.edu.cn%

\IEEEcompsocthanksitem 
$^\dagger$ These authors contributed equally.
Corresponding author: Ming Li.
}


\maketitle

\begin{abstract}
Natural-language Computer-Aided Design (CAD) code generation aims to turn design intent into executable and editable parametric programs.
Large language models (LLMs) make this goal increasingly practical, but useful systems must preserve the construction process behind the rendered geometry.
Existing benchmarks and methods mostly focus on how closely the generated CAD model matches the reference geometry, often using metrics such as Intersection over Union (IoU).
Such metrics can miss errors in part decomposition, construction hierarchy, Boolean operations, sketch structure, and geometric relations.
This gap calls for a representation that makes design intent explicit and lets a system check generated code against that intent.
We propose CIT-CAD, a framework that infers a Constraint Intent Tree (CIT) from the input description to represent the intended entities, hierarchy, operations, and relations.
The tree has two roles: it guides CAD code generation and defines expected constraints for verification.
The framework extracts actual constraints from the generated program, compares them with the expected constraints, and uses mismatches to localize and repair design violations.
Experiments show that the framework improves CAD generation performance, with larger gains on more complex multi-entity designs.
By turning design intent into an explicit and checkable object, this work is the first attempt to move text-to-CAD generation beyond rendered-geometry matching toward construction-aware synthesis, verification, and repair. 
The complete artifacts are released at \url{https://anonymous.4open.science/r/CIT-CAD-2FC2}.
\end{abstract}

\begin{IEEEkeywords}
CAD code generation, large language models, program verification, constraint reasoning, program repair
\end{IEEEkeywords}

\section{Introduction}
\label{sec:Introduction}

Natural-language Computer-Aided Design (CAD) code generation studies how to produce executable CAD programs from design descriptions.
Recent large language models~(LLMs) have made this task increasingly feasible by generating CAD code from informal instructions~\cite{khan2024text2cad,doris2025cad,guan2025cad,he2025cad,Alrashedycad}.
Unlike a final geometric output, a CAD program is an editable construction procedure that records the modeling steps and parameters used to construct the object.
These steps include 2D sketches, extrusions that turn sketches into 3D features, Boolean operations that add or cut parts, and geometric relations among parts. Parametric programming using CadQuery\footnote{\url{https://github.com/cadquery/cadquery}} makes CAD code generation more amenable to LLMs by representing CAD models as executable construction procedures with explicit entities, parameters, and Boolean operations.

Recent work has explored CAD program generation from design descriptions and visual inputs using LLMs or vision-language models~\cite{guan2025cad,he2025cad,Alrashedycad,zhou2026cad,alam2024gencad}.
These studies demonstrate the potential of automated CAD code generation for turning informal specifications into executable programs.
Their evaluations commonly report execution success and final-geometry similarity, typically using Intersection over Union~(IoU) over the occupied regions of the 3D models produced by generated and reference CAD code~\cite{khan2024text2cad,guan2025cad,he2025cad,niu2025cad}.
These geometric measures are necessary because a generated CAD program must produce the requested geometry.

Final-geometry similarity is therefore necessary, but it is a coarse signal for CAD code generation.
In realistic generation, a model may partially match the reference while violating construction constraints that determine how the model should be edited.
An IoU score cannot localize whether an error comes from wrong decomposition, a missing feature, an incorrect Boolean role, or a violated relation between parts.
Thus, natural-language CAD code generation should evaluate not only final geometry, but also the intended construction constraints behind it.

\begin{figure}[t]
    \centering
    \includegraphics[width=0.9\linewidth]{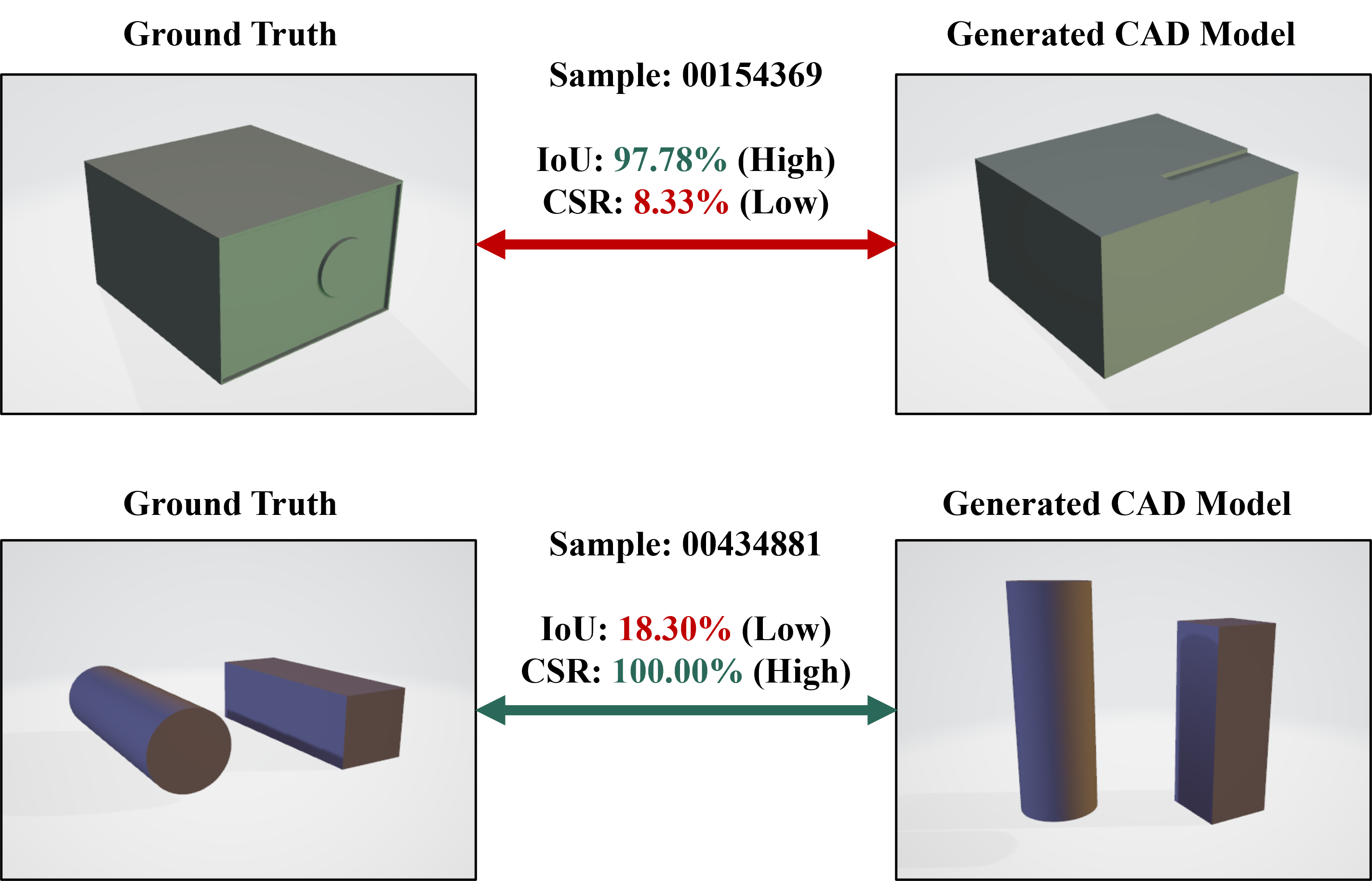}
    \caption{Motivating example for construction-aware CAD evaluation. Geometric-level similarity can hide program-level errors such as missing entities, incorrect Boolean roles, or violated relations, while CIT-CAD exposes these errors as explicit constraint violations.}
    \label{fig:motivation-example}
\end{figure}

This motivates a construction-aware view of CAD code generation.
Instead of treating generation as text-to-code followed mainly by final-geometry scoring, a system should make the intended construction process explicit.
Natural-language descriptions rarely enumerate every requirement, but they often imply expected entities, operations, and relations.
Once made explicit, these requirements can provide a common target for generation, verification, and repair.

Under this view, the intended construction constraints are represented by a Constraint Intent Tree~(CIT).
A CIT records the expected CAD entities, their construction hierarchy, the operations that combine them, and the relations that should hold among them.
It is intentionally lighter than a complete CAD program: rather than inferring every missing dimension or coordinate, it records the construction constraints that are important for program correctness.
This makes the intended design explicit before code generation and comparable after code generation.

Building on the CIT, we propose CIT-CAD, a framework that uses the inferred construction intent as a common reference for natural-language CAD code generation, verification, and repair.
Given a design description, CIT-CAD first infers a CIT and then generates executable CAD code conditioned on both the original description and this CIT.
For verification, CIT-CAD derives expected constraints from the CIT and extracts actual constraints from the generated CAD program through deterministic program and geometry analysis.
It then compares the two to localize violated constraints, which are converted into repair feedback for the LLM.
A revised program is accepted only when it preserves previously satisfied constraints and reduces the remaining violations.
In this way, CIT-CAD keeps generation, verification, and repair aligned around construction constraints rather than only final geometry.

We evaluate CIT-CAD on multi-entity samples derived from Text2CAD~\cite{khan2024text2cad}, comparing it with direct LLM-based CAD code generation.
The evaluation combines valid-program rate, final-geometry similarity, and constraint satisfaction.
CIT-CAD improves overall performance, especially on structurally complex designs, suggesting that explicit construction intent improves robustness beyond geometry matching alone.

This paper makes the following contributions:
\begin{itemize}
    \item \textbf{Constraint Intent Tree for CAD code generation.}
    We introduce a tree structure that captures expected CAD entities, construction hierarchy, operations, and relations implied by natural-language descriptions.
    \item \textbf{CIT-based CAD generation and verification.}
    We propose CIT-CAD, which infers a CIT, generates CAD code conditioned on it, and verifies generated programs against CIT-derived constraints.
    \item \textbf{Localized constraint repair.}
    We convert entity-level and relation-level violations into repair feedback and accept a repair only if it preserves satisfied constraints and reduces remaining violations.
    \item \textbf{Empirical evidence beyond geometry similarity.}
    We evaluate CIT-CAD on multi-entity Text2CAD samples and show consistent improvements in executability and constraint satisfaction across multiple LLM backbones.
\end{itemize}

\section{Related Work}
\label{sec:RelatedWork}

CIT-CAD constructs CAD models via three core steps, generating CAD scripts from natural language or visual prompts, clarifying generation objectives via intermediate representations, and validating synthesized CAD programs.

\subsection{Parametric CAD Code Generation and Benchmarks}

Parametric CAD programs encode both geometric shapes and editable construction history, forming the foundation of learning-based CAD synthesis. Existing datasets and paradigms support structured program generation, including ABC~\cite{koch2019abc}, Fusion 360 Gallery~\cite{willis2021fusion}, DeepCAD~\cite{DeepCAD}, and SketchGraphs~\cite{sketchgraphs2020}, as well as general programmatic shape generation methods such as CSGNet~\cite{csgnet2017}, ShapeAssembly~\cite{jones2020shapeassembly}, and CAD sequence modeling techniques~\cite{tian2019shapeprograms,ganin2021cadlanguage,karadeniz2024davinci}.

Compared with vision- or point-cloud-based CAD reconstruction, text-to-CAD generation is more practically demanding and technically challenging. Natural language serves as the most intuitive and universal interaction interface for novice and expert designers, making text-conditioned CAD generation the most accessible and widely required design paradigm in real-world workflows. Recent text-to-CAD benchmarks and optimization methods~\cite{khan2024text2cad,guan2025cad,he2025cad,govindarajan2025cadmium,liao2025automated} have validated the feasibility of LLM-based CAD programming.
In contrast, multimodal CAD methods relying on drawing~\cite{ma2024draw}, point clouds~\cite{rukhovich2025cad}, or 3D inputs~\cite{Alrashedycad,doris2025cad,xu2024cad,wang2025cad,yuan2024openecad,zhao2024chatcad+} require specialized input data and are limited to specific reconstruction scenarios, resulting in less general applicability.
Nevertheless, current text-to-CAD evaluations remain limited. Most approaches rely on executability checks and geometric metrics such as IoU and Chamfer distance~\cite{zhou2026cad,niu2025cad}, which only measure final shape similarity. Generated CAD programs can produce visually identical geometries while deviating significantly in sketch structures, Boolean roles, and hierarchical modeling logic. 

\subsection{Intermediate Intent Representations for Code Generation}

Code generation research often introduces intermediate representations to make user intent easier for LLMs to follow.
Code representation models connect natural language and programs~\cite{codebert}, while outline-based, coarse-to-fine, and structured reasoning methods use program sketches or intermediate steps to make generation more controllable~\cite{zheng2023outline,li2025structured,yang2024chain}.
Prompting and example-selection work provides a complementary form of guidance, showing that code structure can matter when constructing LLM inputs~\cite{li2025large,yali_tse25_cast}.
Structure-aware code representations have also been used for bug localization, code-change modeling, and commit-message generation, where semantic-flow, control-flow, and context-aware representations help preserve program behavior beyond token sequences~\cite{GraphCodeBERT,du2023pre,ma2023capturing,du2025capturing}.
Recent software-engineering surveys further show that LLM-based development faces persistent challenges in controllability, evaluation, efficiency, and trustworthiness~\cite{fan2023large,gao2025challenges,shi2025greenllmse}.
Formal studies of code-generation behavior examine license compliance, generalization beyond familiar problems, harmful off-the-shelf use, code simplification, and code translation~\cite{xu2025licoeval,zhang2025unseen,alkaswan2025codered,wang2024natural,yang2024codetranslation}.
Recent ICSE studies further evaluate LLMs in pragmatic and class-level code-generation settings, showing that realistic program structure and context remain difficult for general-purpose code models~\cite{yu2024codereval,du2024classlevel}.
Several studies further show that preserving structural, semantic, and execution-state information is important when translating code across languages or long contexts~\cite{du2023beyond,du2024joint,xin2025enhancing}.
These studies suggest that explicit intermediate objects or structurally informed prompts can improve controllability before code generation.
For CAD code generation, the relevant intermediate object extends beyond a syntactic plan or a sequence of coding steps.
It should represent construction intent, such as expected entities, hierarchy, Boolean roles, sketch properties, and inter-entity relations.

\subsection{Verification and Repair of Generated Programs}

Generated code often benefits from feedback after the first attempt.
General code-generation work evaluates and improves model outputs through execution, tests, reranking, or robustness analysis~\cite{cassano2023multiple,mastropaolo2023robustness,zhang2023coder}.
Interactive and test-driven workflows further show that feedback can guide users and models toward better programs~\cite{fakhoury2024llm}.
LLM-based testing research has studied unit-test generation, mutation testing, fuzzing, test-oriented model fine-tuning, and consistency-oriented evaluation of generated code~\cite{wang2024llmtesting,yuan2024chatgptunittest,tang2024chatgptvs,tip2025llmorpheus,deng2023zeroshotfuzzers,shang2025unittestfinetune,zhang2024fuzzdriver,sun2026aces,xia2024fuzz4all}.
At test time, self-refinement methods use feedback and iterative revision to improve LLM outputs without additional training~\cite{madaan2023selfrefine}.
Program repair systems extend this idea with autonomous repair agents, fact-selection mechanisms, templates, fine-tuning, and hybrid program-analysis feedback that determine what information should be supplied to the model~\cite{guo2024chatgptrefinement,peng2024domainknowledge,bouzenia2025repairagent,parasaram2025fact,huang2025templateguided,ehsani2025hierarchical,li2024peftrepair,huang2023finetuningrepair,li2025hybridapr,huang2025comprehensive,yang2025revisiting}.
These feedback signals are useful and typically operate at the level of tests, compiler behavior, visual review, or global geometry.
They can reveal that a result is non-executable or geometrically inaccurate, while the repair target often remains global rather than tied to a specific expected entity, operation, or relation.

\begin{figure*}[t]
    \centering
    \includegraphics[width=\linewidth]{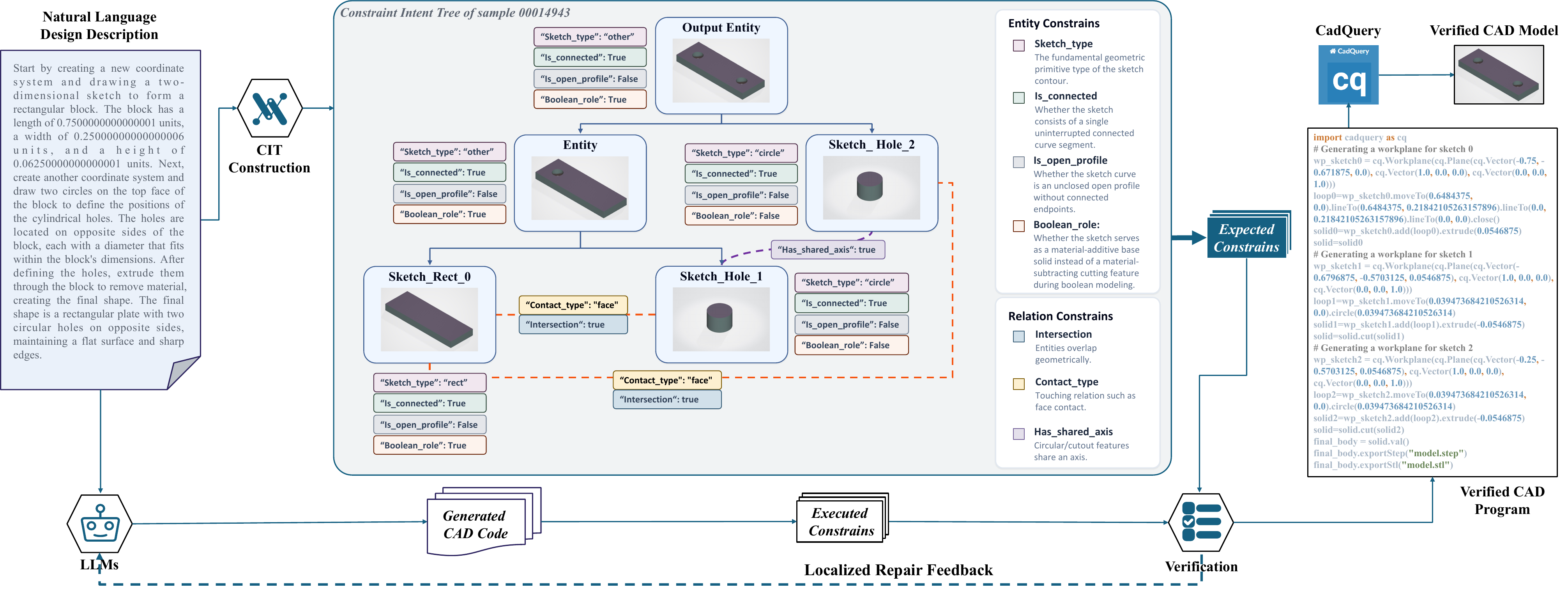}
    \caption{Overview of the framework CIT-CAD. The framework infers a Constraint Intent Tree from the natural-language description, generates CAD programs conditioned on the tree, extracts expected and executed constraints, validates constraint satisfaction, and uses localized violations to guide monotonic repair.}
    \label{fig:cit-cad-pipeline}
\end{figure*}

\begin{table*}[t]
\centering
\caption{Constraint Category in CIT-CAD.}
\label{tab:cit-constraints}
\begin{tabular}{cccc}
\toprule
\textbf{Category} & \textbf{Constraint} & \textbf{Meaning} & \textbf{Values} \\
\midrule
Entity & \lstinline|Sketch_type| & Geometric primitive category of the 2D sketch contour & \lstinline|rect|, \lstinline|circle|, etc. \\
Entity & \lstinline|Is_connected| & Indicator whether the sketch forms a single continuous connected curve & \lstinline|True|/\lstinline|False| \\
Entity & \lstinline|Is_open_profile| & Indicator whether the sketch corresponds to an unclosed polyline profile & \lstinline|True|/\lstinline|False| \\
Entity & \lstinline|Boolean_role| & Functional boolean operation type for feature construction & \lstinline|True|: additive; \lstinline|False|: subtractive \\
\midrule
Relation & \lstinline|Intersection| & Indicator whether two constructed solids have volumetric overlap & \lstinline|True|/\lstinline|False| \\
Relation & \lstinline|Contact_type| & Specific geometric contact form between adjacent solid components & \lstinline|point|, \lstinline|line|, \lstinline|face|, \lstinline|none| \\
Relation & \lstinline|Has_coplanar_faces| & Total count of coplanar face pairs across paired entities & Integer count \\
Relation & \lstinline|Has_shared_axis| & Indicator whether arc or cylinder features share a central axis & \lstinline|True|/\lstinline|False| \\
Relation & \lstinline|Tangency| & Indicator whether adjacent surfaces maintain smooth tangent connection & \lstinline|True|/\lstinline|False| \\
Relation & \lstinline|Coaxiality| & Indicator whether two structural parts share the same central axis & \lstinline|True|/\lstinline|False| \\
Relation & \lstinline|Alignment| & Spatial directional and coordinate alignment constraint between parts & \lstinline|x|, \lstinline|y|, \lstinline|z|, \lstinline|parallel|, \lstinline|perpendicular| \\
Relation & \lstinline|Symmetry| & Structural mirror or rotational symmetry relationship of components & \lstinline|mirror|, \lstinline|rotational| \\
\bottomrule
\end{tabular}
\end{table*}

\section{Method}
\label{sec:method}

This section presents CIT-CAD, a Constraint Intent Tree-based framework for natural-language CAD code generation and verification. The key idea is to introduce an intermediate representation, called a Constraint Intent Tree (CIT), between a natural-language design description and executable CAD code. A CIT captures not only the hierarchical construction structure of a CAD model, but also the implicit geometric and semantic constraints that should hold among its parts. CIT-CAD uses the CIT for two purposes: guiding LLM-based CAD program generation and deriving deterministic constraints for post-generation verification.

\subsection{Overview}
\label{subsec:method-overview}

As shown in Figure~\ref{fig:cit-cad-pipeline}, given a natural-language CAD description $D$, CIT-CAD performs four stages. First, it infers a CIT $T$ from $D$ using an LLM with a schema-constrained prompt. Second, it generates a CAD program $P$ conditioned on both $D$ and $T$. Third, it deterministically derives an expected constraint set $C_T$ from $T$, and extracts an executed constraint set $C_P$ from $P$ through static program analysis and runtime geometric reasoning. Finally, it compares $C_T$ and $C_P$ to validate the generated program, localizes violations to specific CIT nodes, relations, or subtrees, and uses the localized feedback to drive an iterative repair loop.

The framework is designed around the observation that geometry-only metrics, such as IoU, are insufficient for evaluating generated CAD programs. A program can produce plausible final geometry while using an incorrect construction hierarchy, missing a subtractive feature, violating coplanarity or shared-axis constraints, or losing editability. CIT-CAD therefore evaluates generated CAD code by checking whether it satisfies the structural and semantic constraints implied by the design intent.

\subsection{Constraint Intent Tree}
\label{subsec:cit}

\begin{definition}[Constraint Intent Tree]
\label{def:cit}
A Constraint Intent Tree is a tuple $T=(V,\mathcal{E},A,R)$, where $V$ is a set of nodes representing CAD entities or non-physical grouping nodes; $\mathcal{E}\subseteq V\times V$ is a set of parent-child edges representing hierarchical construction structure; $A$ is the set of node-level attributes and constraints; $R$ is the set of inter-node relations encoding geometric and semantic constraints among nodes.
\end{definition}

Each entity node in a CIT corresponds to an expected parametric construction entity in the generated CAD program. In our implementation, a node contains an identifier, a human-readable name, an expected code variable, node-level constraints, relation-level constraints, and optionally child nodes. Construction semantics are represented by verifiable constraints, especially the node-level \lstinline|Boolean_role| field and relation-level spatial constraints. Non-physical grouping nodes are used only to organize related entities and do not correspond to generated solids.

Node-level constraints describe properties that can be verified on individual generated entities. Let $\mathcal{F}$ denote the set of node-level constraint fields, and let $\mathcal{Y}$ denote the union of their possible values, including Boolean, categorical, and integer-valued domains. We represent the node-level annotation set as
\begin{equation}
A=\{(v,f,y)\mid v\in V,\ f\in\mathcal{F},\ y\in\mathcal{Y}\},
\end{equation}
where $v$ is a CIT node, $f$ is a node-level field such as \lstinline|Sketch_type|, \lstinline|Is_connected|, etc., and $y$ is the expected value.

Relation-level constraints describe expected relationships between entities or subtrees. Let $\mathcal{G}$ denote the set of relation types. These include \lstinline|Contact_type|, \lstinline|Intersection|, \lstinline|Has_coplanar_faces|, and \lstinline|Has_shared_axis|. We represent the relation annotation set as
\begin{equation}
R=\{(v_i,v_j,g,y)\mid v_i,v_j\in V,\ g\in\mathcal{G},\ y\in\mathcal{Y}\},
\end{equation}
where $v_i$ and $v_j$ are participating CIT nodes, $g$ is a relation type, and $y$ is the expected relation value, such as \lstinline|face| for a face-contact relation or \lstinline|True| for a shared-axis relation.

The CIT is intentionally not a full CAD program. It omits low-level numeric details when they are underspecified in natural language, but retains the construction intent that should be preserved by the generated code. This makes the representation suitable both for generation and for validation.

\subsection{CIT Constraint Annotation}
\label{subsec:cit-constraints}

CIT-CAD annotates each inferred CIT with a compact constraint vocabulary that can be checked against generated CAD programs. The vocabulary is divided into entity-level constraints and relation-level constraints. Entity-level constraints describe local properties of individual construction entities, while relation-level constraints describe spatial or semantic relations among entities. This annotation is part of the method pipeline: the expected constraints used for verification are derived from the CIT inferred from the input specification, not directly from the reference CAD program.

\cref{tab:cit-constraints} lists the constraint vocabulary used in the current implementation. Entity-level constraints include \lstinline|Sketch_type|, \lstinline|Is_connected|, \lstinline|Is_open_profile|, and \lstinline|Boolean_role|. Relation-level constraints include \lstinline|Intersection|, \lstinline|Contact_type|, \lstinline|Has_coplanar_faces|, and \lstinline|Has_shared_axis|. These constraints are selected because they are common in multi-entity CAD programs and can be deterministically extracted from generated code through static analysis and runtime geometric reasoning.

For each entity node, \lstinline|Sketch_type| identifies the primitive or composite sketch class, \lstinline|Is_connected| checks whether the sketch consists of a single connected profile, \lstinline|Is_open_profile| records whether the profile is represented as an open drawing chain before closure, and \lstinline|Boolean_role| indicates whether the entity contributes additively to the final model or serves as a subtractive cutting feature. For pairs or groups of entities, relation-level constraints check whether entities overlap, touch and how they touch, share coplanar faces, or share circular/cylindrical axes. These constraints capture construction semantics that are difficult to evaluate using final-shape overlap alone.

\subsection{CIT Inference from Natural Language}
\label{subsec:cit-inference}

Given a natural-language description $D$, CIT-CAD first prompts an LLM to infer a CIT in JSON format. The prompt specifies the node schema, node-level constraints, relation-level constraints, and naming conventions. It instructs the model to use stable snake-case identifiers, create one node for every physical solid or cutting tool, and use null values for uncertain constraints rather than hallucinating precise geometry.

Formally, the CIT inference stage estimates:
\begin{equation}
T = f_{\theta}^{\text{tree}}(D),
\end{equation}
where $f_{\theta}^{\text{tree}}$ is an LLM used as a structured information extractor. The output is parsed as JSON and normalized to ensure that it contains a root node, expected variables, node constraints, relation constraints, and child nodes. 

Although the CIT is inferred by an LLM, downstream verification does not trust the LLM output blindly. Instead, the CIT is treated as the explicit design intent extracted from the natural-language specification. CIT-CAD then uses deterministic procedures to derive constraints from this tree and to check generated code against them.

\subsection{Constraint-Guided CAD Code Generation}
\label{subsec:code-generation}

After inferring the CIT, CIT-CAD generates CAD programs using the natural-language description and the CIT jointly:
\begin{equation}
P_0 = f_{\theta}^{\text{code}}(D,T).
\end{equation}
The code-generation prompt requires the generated program to import \lstinline|cadquery as cq|, create one variable for every entity node in the CIT, use the node's expected variable name, construct each entity through a Workplane sketch followed by \lstinline|.extrude(...)|, compose additive and subtractive entities according to their \lstinline|Boolean_role|, and assign the final object to a variable named \lstinline|solid|.

Conditioning code generation on the CIT has two benefits. First, it gives the LLM an explicit construction scaffold, reducing ambiguity in entity naming, part decomposition, and boolean composition. Second, it makes the generated code more analyzable: because expected entities are named in the CIT, the validator can compare extracted code constraints against the intended nodes and relations.

\subsection{Deterministic Constraint Extraction}
\label{subsec:constraint-extraction}

CIT-CAD derives two comparable short-constraint sets: the expected constraint set $C_T$ from the CIT $T$, and the executed constraint set $C_P$ from the generated program $P$. The expected set is obtained by normalizing the CIT annotations:
\begin{equation}
C_T=\mathrm{Normalize}(A,R)=(E_T,Q_T).
\end{equation}
Here, $A$ and $R$ are the raw node-level and relation-level intent annotations stored on the CIT, $E_T$ is the expected entity-constraint map, and $Q_T$ is the expected relation-constraint set. The conversion maps each CIT node $v$ to its expected code variable $n$, removes unspecified or null-valued annotations, and canonicalizes relation participants so that pairwise relation comparison is order-invariant.

\noindent\textbf{Expected constraints from the CIT.}
The CIT-to-constraint conversion is deterministic. For each node-level annotation $(v,f,y)\in A$, if node $v$ maps to expected variable $n$, CIT-CAD creates the entity constraint $E_T(n,f)=y$. For each relation annotation $(v_i,v_j,g,y)\in R$, if $v_i$ and $v_j$ map to expected variables $n_i$ and $n_j$, CIT-CAD creates the canonical relation constraint $(g,\operatorname{sort}(n_i,n_j),y)\in Q_T$. The converter also records a node map that links each constraint back to its CIT node path, enabling later localization.

\noindent\textbf{Executed constraints from generated code.}
To extract $C_P$, CIT-CAD analyzes the generated CAD program using deterministic program and geometry analysis. Static AST analysis identifies variables assigned from \lstinline|.extrude(...)|, recovers sketch operations, classifies \lstinline|Sketch_type|, and computes \lstinline|Boolean_role| from \lstinline|union|, \lstinline|cut|, and \lstinline|intersect| expressions. Runtime geometric reasoning executes the CAD program and inspects generated geometry to determine relation-level constraints such as \lstinline|Contact_type|, \lstinline|Intersection|, \lstinline|Has_coplanar_faces|, and \lstinline|Has_shared_axis|. The extracted result is normalized into the same short-constraint format as $C_T$.

Formally, for a constraint source $X\in\{T,P\}$, where $X=T$ denotes the expected source induced by the CIT and $X=P$ denotes the executed source extracted from the generated program, let $\mathcal{N}_X$ be the set of entity variables observed in that source. CIT-CAD represents entity constraints from source $X$ as a partial map
\begin{equation}
E_X: \mathcal{N}_X \times \mathcal{F} \rightharpoonup \mathcal{Y},
\end{equation}
where $E_X(n,f)=y$ means that entity variable $n\in\mathcal{N}_X$ has field $f\in\mathcal{F}$ with value $y\in\mathcal{Y}$. Relation constraints from source $X$ are represented as canonical triples
\begin{equation}
Q_X=\{(g,\operatorname{sort}(\mathbf{u}),y)\mid g\in\mathcal{G},\mathbf{u}\in\mathcal{N}_X^2,y\in\mathcal{Y}\}.
\end{equation}
Here, $\mathbf{u}$ is the tuple of participating entity variables, and $\operatorname{sort}(\mathbf{u})$ canonicalizes their order before comparison. The normalized short-constraint set for source $X$ is therefore
\begin{equation}
C_X=(E_X,Q_X).
\end{equation}
This gives $C_T=(E_T,Q_T)$ for the CIT-derived expected constraints and $C_P=(E_P,Q_P)$ for the generated-program constraints.

This design separates uncertain LLM-based interpretation from deterministic validation. The LLM may infer or generate an imperfect result, but the verification step uses a fixed analyzer and geometry engine to decide whether the generated program satisfies the CIT-derived requirements.

\subsection{Structural and Semantic Verification}
\label{subsec:verification}

Given expected constraints $C_T=(E_T,Q_T)$ and executed constraints $C_P=(E_P,Q_P)$, CIT-CAD verifies the generated program using exact equality over the normalized short-constraint representation. For an expected entity variable $n$ and field $f$, the entity-field satisfaction predicate is
\begin{equation}
\operatorname{sat}_{\mathrm{ent}}(n,f)=
\begin{cases}
1, & n\in\mathcal{N}_P \ \land\ E_P(n,f)=E_T(n,f),\\
0, & \text{otherwise}.
\end{cases}
\label{eq:entity-satisfaction}
\end{equation}
If an expected entity variable does not appear in $P$, all constraints associated with that entity are marked as violated and the feedback records a missing-entity violation.

For relation constraints, CIT-CAD first canonicalizes the participating entity names and then checks set membership:
\begin{equation}
\operatorname{sat}_{\mathrm{rel}}(g,\mathbf{u},y)
=
\mathbb{I}\left[(g,\operatorname{sort}(\mathbf{u}),y)\in Q_P\right],
\label{eq:relation-satisfaction}
\end{equation}
for every expected relation triple $(g,\operatorname{sort}(\mathbf{u}),y)\in Q_T$. The generated program satisfies the CIT if and only if all expected entity-field constraints and all expected relation constraints are satisfied:
\begin{equation}
\begin{aligned}
\operatorname{Valid}(P,T)
=& \bigwedge_{(n,f)\in\operatorname{dom}(E_T)} \operatorname{sat}_{\mathrm{ent}}(n,f) \\
&\land \bigwedge_{(g,\mathbf{u},y)\in Q_T} \operatorname{sat}_{\mathrm{rel}}(g,\mathbf{u},y).
\end{aligned}
\label{eq:validity}
\end{equation}

The validator also constructs explicit satisfied and violated constraint sets. Let $\mathcal{C}_T$ denote the finite set of individual expected constraints contained in $C_T$:
\begin{equation}
\mathcal{C}_T=
\{(n,f,E_T(n,f))\mid(n,f)\in\operatorname{dom}(E_T)\}\cup Q_T.
\end{equation}
\noindent CIT-CAD partitions $\mathcal{C}_T$ into a satisfied set and a violated set:
\begin{equation}
S(P,T)=\{c\in\mathcal{C}_T\mid\operatorname{sat}(c,P)=1\},
V(P,T)=\mathcal{C}_T\setminus S(P,T).
\label{eq:constraint-partition}
\end{equation}
Here, \(S(P,T)\) is the set of expected constraints satisfied by program \(P\), \(V(P,T)\) is the set of expected constraints violated by \(P\), and \(\operatorname{sat}(c,P)\) denotes \cref{eq:entity-satisfaction} when \(c\) is an entity-field constraint and \cref{eq:relation-satisfaction} when \(c\) is a relation constraint. This partition is exactly the information later consumed by the repair loop.

The validator produces three outputs. First, it reports \(\operatorname{Valid}(P,T)\). Second, it records satisfied and violated constraint identifiers.
Third, it localizes every violated constraint to the corresponding CIT node, relation, or subtree using the node map produced during CIT-to-constraint conversion.

This localization is important because it turns validation failures into actionable repair feedback. Instead of reporting only that the final geometry is inaccurate, CIT-CAD can indicate that a specific node is missing, a specific sketch field is wrong, or a specific relation between two CIT nodes is not satisfied.

\subsection{Localized Reflexion Repair}
\label{subsec:repair}

CIT-CAD uses localized validation feedback to iteratively repair the generated program. After each validation round, the feedback contains violated entities, violated relations, their corresponding CIT paths, and concrete repair guidance. The LLM repair prompt receives the natural-language description, the CIT, the current code, and the validation feedback, and returns a revised CAD program.

However, CIT-CAD does not assume that the LLM repair model is reliable. A repair may fix one violated constraint while accidentally breaking a previously satisfied one. To prevent such regression, CIT-CAD wraps the LLM repair model in a monotonic constraint-preserving acceptance loop, shown in \cref{alg:repair}.

Let $K$ be the maximum number of repair attempts, and let $P_t$ be the accepted program at repair iteration $t$. The deterministic validator partitions $\mathcal{C}_T$ into satisfied and violated constraints for $P_t$:
\begin{equation}
S_t = S(P_t,T), \quad V_t=V(P_t,T).
\end{equation}
Here, $S_t$ and $V_t$ are shorthand for the satisfied and violated sets at iteration $t$. The implementation maintains a locked set $L_t$ of constraints that have been satisfied by any accepted program so far:
\begin{equation}
L_t = \bigcup_{i=0}^{t} S_i.
\label{eq:locked-set}
\end{equation}
The validator also converts $V_t$ into localized repair feedback $F_t$, which records the violated constraints and their corresponding CIT nodes, relations, or subtrees. After the LLM proposes a candidate repair $P'_t$, CIT-CAD accepts the candidate only if both of the following conditions hold:
\begin{equation}
L_t \subseteq S(P'_t,T).
\label{eq:locked-preservation}
\end{equation}
\begin{equation}
|V(P'_t,T)| < |V_t|.
\label{eq:strict-progress}
\end{equation}
If either condition fails, the candidate is rejected and the previous program $P_t$ is retained. Thus, the LLM may propose arbitrary changes, but only validation-approved changes are committed.

\begin{algorithm}[t]
\caption{Monotonic CIT-guided repair}
\label{alg:repair}
\KwIn{Description $D$, CIT $T$, initial program $P_0$, max repair attempts $K$}
\KwOut{Verified or best accepted CAD program $P$}
$C_T \leftarrow \mathrm{ExtractConstraints}(T)$\,, $P \leftarrow P_0$\,, $L \leftarrow \emptyset$\;
\For{$t \leftarrow 0$ \KwTo $K-1$}{
    $(S_t,V_t) \leftarrow \mathrm{Validate}(P,C_T)$\;
    \If{$V_t=\emptyset$}{
        \Return $P$\;
    }
    $L \leftarrow L \cup S_t$\;
    $F_t \leftarrow \mathrm{Localize}(V_t,T)$\;
    $P' \leftarrow \mathrm{LLMRepair}(D,T,P,F_t,L)$\;
    $(S',V') \leftarrow \mathrm{Validate}(P',C_T)$\;
    \If{$L \subseteq S'$ \textbf{and} $|V'|<|V_t|$}{
        $P \leftarrow P'$\;
    }
    \If{$|V'|\ge |V_t|$}{
        \Return $P$\;
    }
}
\Return $P$\;
\end{algorithm}

\begin{proposition}[Regression-free accepted repairs]
\label{prop:regression-free}
Assume the validator is deterministic and CIT-CAD accepts a candidate repair only when \cref{eq:locked-preservation} holds. Then no accepted repair violates a constraint that was satisfied in the previous accepted program.
\end{proposition}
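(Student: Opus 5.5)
The argument is a direct unwinding of the acceptance rule together with the definition of the locked set in \cref{eq:locked-set}. Fix a repair iteration $t$ at which a candidate $P'_t$ is accepted, so that the next accepted program is $P_{t+1}=P'_t$. Let $P_t$ be the previously accepted program. We must show that every constraint $c\in S(P_t,T)$ also lies in $S(P_{t+1},T)$, i.e. $S(P_t,T)\subseteq S(P'_t,T)$.

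\textbf{Step 1: the satisfied set of the previous program is locked.} By \cref{eq:locked-set}, $L_t=\bigcup_{i=0}^{t}S_i$, and in particular $S_t=S(P_t,T)\subseteq L_t$. In \cref{alg:repair} this corresponds to the update $L\leftarrow L\cup S_t$, which is executed before the candidate is generated and tested. So at the moment of the acceptance check, $L$ already contains $S(P_t,T)$.

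\textbf{Step 2: acceptance enforces preservation.} Acceptance requires \cref{eq:locked-preservation}, namely $L_t\subseteq S(P'_t,T)$. Chaining this with Step 1 gives
\[
S(P_t,T)\subseteq L_t\subseteq S(P'_t,T)=S(P_{t+1},T).
\]
Hence $V(P_{t+1},T)=\mathcal{C}_T\setminus S(P_{t+1},T)$ is disjoint from $S(P_t,T)$. This means no constraint satisfied by $P_t$ is violated by the accepted repair. Because $\mathcal{C}_T$ is fixed once $T$ is fixed, the partition in \cref{eq:constraint-partition} is taken over the same universe at every iteration.

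\textbf{Step 3: determinism.} This is the only subtle point, though it is a modeling issue rather than a mathematical one. The set $S(P'_t,T)$ checked at acceptance time must be the same set that we later attribute to $P_{t+1}$. I would argue that because the validator is deterministic, $S(\cdot,T)$ is a well-defined function of the program, and its value does not depend on when $\mathrm{Validate}$ is called. Consequently, the $S'$ computed inside the loop equals $S_{t+1}$ computed at the next iteration. Without this assumption, re-validation of $P_{t+1}$ could return a different set and the inclusion could fail.

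\textbf{Corollary by induction.} The same reasoning extends over all accepted programs. Since $L_t$ is monotonically non-decreasing and each accepted program contains the current $L_t$ in its satisfied set, we get $S_0\subseteq S_1\subseteq\cdots$ along accepted programs. Therefore any constraint that has ever been satisfied remains satisfied thereafter.

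Rejected candidates are irrelevant to the claim, because they leave the accepted program unchanged.
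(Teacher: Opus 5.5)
Your proof is correct and follows the paper's argument: both chain $S(P_t,T)=S_t\subseteq L_t$ with the acceptance condition $L_t\subseteq S(P'_t,T)$ to conclude that every constraint satisfied by $P_t$ stays satisfied. Your explicit use of determinism, which identifies the $S'$ computed at acceptance with $S_{t+1}$, and your inductive corollary over all accepted programs are sound additions that the paper leaves implicit.
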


\begin{proof}
At iteration $t$, every constraint satisfied by the current accepted program belongs to the locked set because $S_t\subseteq L_t$. A candidate $P'_t$ is accepted only if $L_t\subseteq S(P'_t,T)$. Therefore, every constraint satisfied by $P_t$ remains satisfied after accepting $P'_t$. If any constraint in $L_t$ becomes violated, the candidate fails the acceptance rule and is rejected, so the accepted program remains $P_t$. Thus, accepted repairs are regression-free with respect to the CIT-derived constraint set.
\end{proof}

The second acceptance condition, \cref{eq:strict-progress}, ensures strict progress in the number of remaining violations for every accepted repair. Since the CIT-derived constraint set is finite, the number of accepted repairs is bounded by the number of initially violated constraints, although the system may still terminate earlier due to a fixed iteration budget or repeated rejected candidates.

\section{Dataset and Metrics}
\label{sec:dataset}

We construct our evaluation set from Text2CAD~\cite{khan2024text2cad} to study construction-aware CAD code generation. Each evaluated sample contains a natural-language description, a held-out reference CAD program, and generated CAD programs from the evaluated methods. This design allows us to evaluate generated programs from both geometry-level and construction-level perspectives without using the reference program as input to generation or CIT inference.

\subsection{Data Source and Sample Selection}
\label{subsec:dataset-source}

We use Text2CAD~\cite{khan2024text2cad} as our benchmark dataset. 
Text2CAD contains about 178K text-to-CAD samples. 
Following its preprocessing protocol, we remove samples with missing natural-language descriptions, missing CAD programs, or duplicate descriptions, resulting in 151K valid text-to-CAD pairs.

Since CIT-CAD targets construction-aware generation, we focus on multi-entity CAD programs. 
We parse each reference CAD program and count the number of explicit extruded construction entities. 
Samples with only one entity are excluded because they usually involve limited construction hierarchy, Boolean composition, or inter-entity relations.
The final evaluation subset contains 26,783 multi-entity samples. 
Their entity-count distribution is 17,193 samples with two entities, 5,966 with three entities, 1,883 with four entities, 731 with five entities, 671 with six entities, and 339 with seven or more entities. 
This long-tail distribution allows us to evaluate both common simple multi-entity designs and more complex cases that require preserving richer construction structure.

For each selected sample, the natural-language description is used as the input specification. 
The reference CAD program is used only for geometry-level evaluation and entity-count grouping, and is never provided to the code generator or the CIT inference stage.

\subsection{Evaluation Metrics}
\label{subsec:dataset-metrics}

We evaluate generated CAD programs using four complementary metrics: valid syntax rate, geometry-level success rate, Intersection over Union, and constraint satisfaction rate.

\paragraph{\textbf{Valid Syntax Rate}}
Valid syntax rate (VSR) measures the percentage of generated programs that can be executed successfully by the CadQuery runtime and converted into a valid solid for evaluation. Programs with syntax errors, unsupported CadQuery calls, runtime exceptions, or missing final solids are counted as invalid. VSR reflects whether a method can generate executable CAD code.

\paragraph{\textbf{Geometry-level metrics}}
We use Intersection over Union (IoU) to measure final-shape similarity between the generated model and the reference model. Both models are normalized before comparison, and IoU is computed over their occupied 3D regions:
\begin{equation}
\mathrm{IoU}(G, R) =
\frac{\mathrm{Vol}(G \cap R)}
{\mathrm{Vol}(G \cup R)},
\end{equation}
where \(G\) and \(R\) denote the generated and reference solids, respectively. We report mean IoU and median IoU over samples whose generated programs execute successfully and whose IoU values fall in the valid range \([0,1]\). We also report Success Rate (IoU@0.95), defined as the percentage of all evaluated samples whose generated geometry reaches an IoU greater than 0.95 with the reference geometry.

\paragraph{\textbf{Constraint Satisfaction Rate}}
We introduce Constraint Satisfaction Rate (CSR) to measure construction-level correctness based on CIT-derived structural and semantic constraints. For each sample \(i\), let \(C_i\) be the deterministic constraint set derived from the CIT inferred from the natural-language description, and let \(\hat{C}_i\) be the constraint set extracted from the generated program. CSR is defined as
\begin{equation}
\mathrm{CSR}_i =
\frac{|\{c \in C_i \mid c \text{ is satisfied by } \hat{C}_i\}|}
{|C_i|}.
\end{equation}
The final CSR is averaged over the evaluation set. Unlike IoU, CSR evaluates whether the generated program preserves the intended construction process, including entity decomposition, sketch properties, Boolean roles, and inter-entity relations. Therefore, CSR complements geometry-level metrics rather than replacing them.


In our experiments, each natural-language description is used to infer a CIT and its corresponding construction constraints. 
These CIT-derived constraints are used for CSR computation and for checking whether the generated CAD program preserves the intended construction process. 
The reference CAD program is used only for geometry-level evaluation, such as building the reference solid for IoU computation. 
This separation ensures that constraint-level evaluation is based on the input design intent, while the reference program remains a held-out target for final-geometry comparison.
\section{Experiments}
\label{sec:experiments}

This section evaluates whether CIT-CAD improves natural-language CAD code generation across different backbone LLMs and different levels of construction complexity. We introduce the experimental settings and organize the evaluation around the following research questions.

\subsection{Experimental Setup}
\label{subsec:exp-setup}

\noindent\textbf{Evaluated methods.}
For each backbone LLM, we compare two generation settings. The \textit{Vanilla} directly prompts the LLM to generate CadQuery code from the natural-language description. \textit{CIT-CAD} first infers a Constraint Intent Tree, generates CadQuery code conditioned on the tree, and then applies constraint-guided validation and repair. This comparison isolates the effect of introducing explicit construction intent and deterministic feedback while keeping the underlying LLM family fixed.

\noindent\textbf{Backbone LLMs.}
We evaluate CIT-CAD with two open-source LLMs, Qwen3-A3B-Instruct~\cite{yang2025qwen3} and DeepSeek-Coder-V2-Lite-Instruct~\cite{guo2024deepseek}, and one closed-source LLM, GPT-5.4-mini~\cite{openai2025gpt5mini}. The open-source models are served through vLLM, while GPT-5.4-mini is accessed through an OpenAI-compatible API. Each backbone is used for both the direct Vanilla and the CIT-CAD pipeline.

\begin{table*}[ht]
\centering
\caption{RQ1 results by LLM backbone and reference entity count. Signed percentages after CIT-CAD results indicate relative change over the corresponding Vanilla setting, with green for improvement and red for degradation.}
\label{tab:rq1-entity-llm}
\footnotesize
\setlength{\tabcolsep}{4.5pt}
\renewcommand{\arraystretch}{1.08}
\resizebox{\textwidth}{!}{%
\begin{tabular}{lllllllll}
\toprule
\rowcolor{gray!12}
\textbf{Method} & \textbf{Metric} & \multicolumn{7}{c}{\textbf{Number of reference entities}} \\
\cmidrule(lr){3-9}
\rowcolor{gray!12}
& & \textbf{Entity=2} & \textbf{Entity=3} & \textbf{Entity=4} & \textbf{Entity=5} & \textbf{Entity=6} & \textbf{Entity$\geq$7} & \textbf{All} \\
\rowcolor{gray!6}
& \textit{\#Number of Samples} & 17193 & 5966 & 1883 & 731 & 671 & 339 & 26783 \\
\midrule
\rowcolor{highlight2}
\multicolumn{9}{l}{\textit{DeepSeek-Coder-V2-Lite-Instruct}} \\
\multirow{5}{*}{\textit{Vanilla}} & Valid Syntax Rate & 77.0\% & 73.5\% & 68.1\% & 62.1\% & 49.6\% & 48.7\% & 74.1\% \\
& Success Rate~(IoU@0.95) & 4.2\% & 1.5\% & 0.7\% & \textbf{0.4\%} & 0.6\% & \textbf{0.3\%} & 3.1\% \\
& Mean IoU & \textbf{32.6\%} & 25.7\% & 22.6\% & 16.6\% & 8.5\% & \textbf{10.3\%} & 29.0\% \\
& Median IoU & 24.6\% & 17.6\% & 14.9\% & 10.1\% & \textbf{2.0\%} & \textbf{1.3\%} & 21.1\% \\
& Constraint Satisfaction Rate & 14.8\% & 15.0\% & 15.5\% & 16.4\% & 13.8\% & 9.6\% & 14.8\% \\
\cmidrule(lr){1-9}
\multirow{5}{*}{\textit{CIT-CAD}} & Valid Syntax Rate & \textbf{85.3\%}{\tiny\textcolor{green!50!black}{(+10.8\%)}} & \textbf{82.5\%}{\tiny\textcolor{green!50!black}{(+12.2\%)}} & \textbf{78.3\%}{\tiny\textcolor{green!50!black}{(+15.0\%)}} & \textbf{76.1\%}{\tiny\textcolor{green!50!black}{(+22.5\%)}} & \textbf{68.7\%}{\tiny\textcolor{green!50!black}{(+38.5\%)}} & \textbf{67.5\%}{\tiny\textcolor{green!50!black}{(+38.6\%)}} & \textbf{83.3\%}{\tiny\textcolor{green!50!black}{(+12.3\%)}} \\
& Success Rate~(IoU@0.95) & \textbf{4.5\%}{\tiny\textcolor{green!50!black}{(+7.1\%)}} & \textbf{1.6\%}{\tiny\textcolor{green!50!black}{(+6.7\%)}} & \textbf{0.8\%}{\tiny\textcolor{green!50!black}{(+14.3\%)}} & \textbf{0.4\%} {\tiny (0.0\%)} & \textbf{0.8\%}{\tiny\textcolor{green!50!black}{(+33.3\%)}} & \textbf{0.3\%} {\tiny (0.0\%)} & \textbf{3.3\%}{\tiny\textcolor{green!50!black}{(+7.3\%)}} \\
& Mean IoU & 32.4\%{\tiny\textcolor{red}{(-0.6\%)}} & \textbf{25.9\%}{\tiny\textcolor{green!50!black}{(+0.8\%)}} & \textbf{25.0\%}{\tiny\textcolor{green!50!black}{(+10.6\%)}} & \textbf{19.9\%}{\tiny\textcolor{green!50!black}{(+19.9\%)}} & \textbf{8.8\%}{\tiny\textcolor{green!50!black}{(+3.5\%)}} & 8.4\%{\tiny\textcolor{red}{(-18.4\%)}} & \textbf{29.2\%}{\tiny\textcolor{green!50!black}{(+0.5\%)}} \\
& Median IoU & \textbf{26.7\%}{\tiny\textcolor{green!50!black}{(+8.5\%)}} & \textbf{17.8\%}{\tiny\textcolor{green!50!black}{(+1.1\%)}} & \textbf{15.0\%}{\tiny\textcolor{green!50!black}{(+0.7\%)}} & \textbf{13.8\%}{\tiny\textcolor{green!50!black}{(+36.6\%)}} & 1.6\%{\tiny\textcolor{red}{(-20.0\%)}} & 0.8\%{\tiny\textcolor{red}{(-38.5\%)}} & \textbf{22.6\%}{\tiny\textcolor{green!50!black}{(+7.0\%)}} \\
& Constraint Satisfaction Rate & \textbf{27.6\%}{\tiny\textcolor{green!50!black}{(+86.5\%)}} & \textbf{28.1\%}{\tiny\textcolor{green!50!black}{(+87.3\%)}} & \textbf{29.7\%}{\tiny\textcolor{green!50!black}{(+91.6\%)}} & \textbf{30.9\%}{\tiny\textcolor{green!50!black}{(+88.4\%)}} & \textbf{35.4\%}{\tiny\textcolor{green!50!black}{(+156.5\%)}} & \textbf{36.1\%}{\tiny\textcolor{green!50!black}{(+276.0\%)}} & \textbf{28.3\%}{\tiny\textcolor{green!50!black}{(+90.3\%)}} \\
\midrule
\rowcolor{highlight2}
\multicolumn{9}{l}{\textit{Qwen3-A3B-Instruct}} \\
\multirow{5}{*}{\textit{Vanilla}} & Valid Syntax Rate & 80.3\% & 77.4\% & 74.2\% & 65.3\% & 64.4\% & 46.3\% & 78.0\% \\
& Success Rate~(IoU@0.95) & 7.2\% & 3.0\% & 1.5\% & 1.0\% & 0.9\% & 0.3\% & 5.5\% \\
& Mean IoU & \textbf{36.0\%} & 28.8\% & 25.0\% & 19.5\% & 10.0\% & \textbf{9.2\%} & 32.6\% \\
& Median IoU & 27.0\% & 20.2\% & \textbf{17.7\%} & 14.3\% & 1.9\% & \textbf{2.5\%} & 23.3\% \\
& Constraint Satisfaction Rate & 17.5\% & 18.0\% & 15.0\% & 14.3\% & 5.4\% & 6.6\% & 16.7\% \\
\cmidrule(lr){1-9}
\multirow{5}{*}{\textit{CIT-CAD}} & Valid Syntax Rate & \textbf{87.1\%}{\tiny\textcolor{green!50!black}{(+8.5\%)}} & \textbf{85.1\%}{\tiny\textcolor{green!50!black}{(+10.0\%)}} & \textbf{80.7\%}{\tiny\textcolor{green!50!black}{(+8.7\%)}} & \textbf{74.3\%}{\tiny\textcolor{green!50!black}{(+13.8\%)}} & \textbf{73.3\%}{\tiny\textcolor{green!50!black}{(+13.9\%)}} & \textbf{54.6\%}{\tiny\textcolor{green!50!black}{(+17.8\%)}} & \textbf{85.1\%}{\tiny\textcolor{green!50!black}{(+9.1\%)}} \\
& Success Rate~(IoU@0.95) & \textbf{7.6\%}{\tiny\textcolor{green!50!black}{(+5.7\%)}} & \textbf{3.6\%}{\tiny\textcolor{green!50!black}{(+21.3\%)}} & \textbf{1.7\%}{\tiny\textcolor{green!50!black}{(+10.3\%)}} & \textbf{1.1\%}{\tiny\textcolor{green!50!black}{(+14.3\%)}} & \textbf{1.0\%}{\tiny\textcolor{green!50!black}{(+16.7\%)}} & \textbf{0.6\%}{\tiny\textcolor{green!50!black}{(+100.0\%)}} & \textbf{5.9\%}{\tiny\textcolor{green!50!black}{(+7.9\%)}} \\
& Mean IoU & \textbf{36.0\%} {\tiny (0.0\%)} & \textbf{29.7\%}{\tiny\textcolor{green!50!black}{(+3.1\%)}} & \textbf{25.3\%}{\tiny\textcolor{green!50!black}{(+1.2\%)}} & \textbf{20.4\%}{\tiny\textcolor{green!50!black}{(+4.2\%)}} & \textbf{10.6\%}{\tiny\textcolor{green!50!black}{(+6.2\%)}} & \textbf{9.2\%} {\tiny (0.0\%)} & \textbf{32.8\%}{\tiny\textcolor{green!50!black}{(+0.6\%)}} \\
& Median IoU & \textbf{27.0\%}{\tiny\textcolor{green!50!black}{(+0.1\%)}} & \textbf{21.1\%}{\tiny\textcolor{green!50!black}{(+4.0\%)}} & 17.2\%{\tiny\textcolor{red}{(-2.5\%)}} & \textbf{14.3\%}{\tiny\textcolor{green!50!black}{(+0.1\%)}} & \textbf{2.1\%}{\tiny\textcolor{green!50!black}{(+11.4\%)}} & 2.0\%{\tiny\textcolor{red}{(-20.0\%)}} & \textbf{23.5\%}{\tiny\textcolor{green!50!black}{(+0.9\%)}} \\
& Constraint Satisfaction Rate & \textbf{30.1\%}{\tiny\textcolor{green!50!black}{(+71.8\%)}} & \textbf{32.1\%}{\tiny\textcolor{green!50!black}{(+78.5\%)}} & \textbf{33.9\%}{\tiny\textcolor{green!50!black}{(+126.4\%)}} & \textbf{34.2\%}{\tiny\textcolor{green!50!black}{(+139.9\%)}} & \textbf{36.1\%}{\tiny\textcolor{green!50!black}{(+572.0\%)}} & \textbf{43.3\%}{\tiny\textcolor{green!50!black}{(+560.7\%)}} & \textbf{31.8\%}{\tiny\textcolor{green!50!black}{(+90.1\%)}} \\
\midrule
\rowcolor{highlight2}
\multicolumn{9}{l}{\textit{GPT-5.4-mini}} \\
\multirow{5}{*}{\textit{Vanilla}} & Valid Syntax Rate & 88.5\% & 89.2\% & 86.7\% & 78.3\% & 69.8\% & 61.5\% & 87.2\% \\
& Success Rate~(IoU@0.95) & 5.6\% & 3.6\% & 1.7\% & 0.8\% & 0.5\% & 0.3\% & 4.6\% \\
& Mean IoU & 33.9\% & 28.9\% & 27.3\% & 27.6\% & 9.3\% & 4.8\% & 31.2\% \\
& Median IoU & 24.9\% & 20.1\% & 22.2\% & 17.3\% & 2.2\% & 0.8\% & 22.6\% \\
& Constraint Satisfaction Rate & 18.0\% & 14.5\% & 16.8\% & 10.0\% & 11.5\% & 11.4\% & 16.7\% \\
\cmidrule(lr){1-9}
\multirow{5}{*}{\textit{CIT-CAD}} & Valid Syntax Rate & \textbf{91.9\%}{\tiny\textcolor{green!50!black}{(+3.8\%)}} & \textbf{92.0\%}{\tiny\textcolor{green!50!black}{(+3.1\%)}} & \textbf{90.9\%}{\tiny\textcolor{green!50!black}{(+4.9\%)}} & \textbf{89.4\%}{\tiny\textcolor{green!50!black}{(+14.2\%)}} & \textbf{72.7\%}{\tiny\textcolor{green!50!black}{(+4.2\%)}} & \textbf{81.0\%}{\tiny\textcolor{green!50!black}{(+31.5\%)}} & \textbf{90.4\%}{\tiny\textcolor{green!50!black}{(+3.7\%)}} \\
& Success Rate~(IoU@0.95) & \textbf{6.8\%}{\tiny\textcolor{green!50!black}{(+22.0\%)}} & \textbf{3.9\%}{\tiny\textcolor{green!50!black}{(+8.3\%)}} & \textbf{2.7\%}{\tiny\textcolor{green!50!black}{(+58.8\%)}} & \textbf{1.1\%}{\tiny\textcolor{green!50!black}{(+37.5\%)}} & \textbf{1.3\%}{\tiny\textcolor{green!50!black}{(+160.0\%)}} & \textbf{0.9\%}{\tiny\textcolor{green!50!black}{(+200.0\%)}} & \textbf{5.5\%}{\tiny\textcolor{green!50!black}{(+20.7\%)}} \\
& Mean IoU & \textbf{34.9\%}{\tiny\textcolor{green!50!black}{(+2.9\%)}} & \textbf{29.3\%}{\tiny\textcolor{green!50!black}{(+1.4\%)}} & \textbf{29.6\%}{\tiny\textcolor{green!50!black}{(+8.4\%)}} & \textbf{29.1\%}{\tiny\textcolor{green!50!black}{(+5.4\%)}} & \textbf{15.5\%}{\tiny\textcolor{green!50!black}{(+66.7\%)}} & \textbf{12.0\%}{\tiny\textcolor{green!50!black}{(+150.0\%)}} & \textbf{32.3\%}{\tiny\textcolor{green!50!black}{(+3.8\%)}} \\
& Median IoU & \textbf{30.7\%}{\tiny\textcolor{green!50!black}{(+23.3\%)}} & \textbf{24.6\%}{\tiny\textcolor{green!50!black}{(+22.4\%)}} & \textbf{27.6\%}{\tiny\textcolor{green!50!black}{(+24.3\%)}} & \textbf{24.5\%}{\tiny\textcolor{green!50!black}{(+41.6\%)}} & \textbf{3.9\%}{\tiny\textcolor{green!50!black}{(+77.3\%)}} & \textbf{1.3\%}{\tiny\textcolor{green!50!black}{(+62.5\%)}} & \textbf{27.9\%}{\tiny\textcolor{green!50!black}{(+23.7\%)}} \\
& Constraint Satisfaction Rate & \textbf{37.4\%}{\tiny\textcolor{green!50!black}{(+107.8\%)}} & \textbf{37.5\%}{\tiny\textcolor{green!50!black}{(+158.6\%)}} & \textbf{37.0\%}{\tiny\textcolor{green!50!black}{(+120.2\%)}} & \textbf{36.3\%}{\tiny\textcolor{green!50!black}{(+263.0\%)}} & \textbf{37.2\%}{\tiny\textcolor{green!50!black}{(+223.5\%)}} & \textbf{50.0\%}{\tiny\textcolor{green!50!black}{(+338.6\%)}} & \textbf{37.7\%}{\tiny\textcolor{green!50!black}{(+125.7\%)}} \\
\bottomrule
\end{tabular}%
}
\end{table*}

\subsection{Quantitative Analysis}
\begin{researchquestion}
How do LLM backbone and CAD entity complexity affect CIT-CAD performance?
\label{rq:llm-entity}
\end{researchquestion}

\cref{tab:rq1-entity-llm} reports the results grouped by reference entity count. Across the open-source LLMs, CIT-CAD consistently improves execution success over direct generation. The improvement is especially clear as entity count increases. 


The evaluated backbones show different behaviors. DeepSeek-Coder benefits strongly from the CIT-CAD wrapper in terms of VSR and CSR. Its CSR improves by 86.5\% on two-entity samples and by 276.0\% on samples with seven or more entities, indicating that the repaired programs satisfy substantially more extracted CIT constraints. Qwen3 also obtains consistent VSR gains and stronger Success Rate in most entity groups, suggesting that explicit construction intent and deterministic feedback are useful not only for weaker code models but also for stronger CAD code generators. GPT-5.4-mini shows a smaller but positive aggregate effect after excluding unavailable generated files: CIT-CAD improves overall VSR by 3.7\%, Success Rate by 19.6\%, mean IoU by 3.5\%, median IoU by 23.5\%, and CSR by 125.7\%.

\findingbox{
CIT-CAD consistently improves the quality of CAD models across different backbones. The gain becomes larger as construction complexity increases for the open-source backbones, showing that explicit Constraint Intent Trees are particularly helpful for multi-entity CAD programs. The CSR results show that CIT-CAD substantially improves constraint-level structural validity.
}

\begin{researchquestion}
How about the correlation between geometric-level and constraint-level correctness?
\label{rq:iou-csr-correlation}
\end{researchquestion}

RQ1 shows that IoU and CSR do not always move in the same direction. To quantify this gap, we pair each valid generated model's normalized IoU with its corresponding CSR and compute sample-level Pearson and Spearman correlations. \cref{fig:rq2-csr-iou-correlation} visualizes the Qwen3 results at the sample level. The correlation between CSR and IoU is close to zero, with Pearson $r=0.022$ and Spearman $\rho=0.023$. This indicates that final-shape overlap does not reliably predict whether the generated CAD program preserves construction intent. We also observe high-IoU/low-CSR cases in which the final solid is geometrically similar to the reference but the program omits expected entities, assigns incorrect Boolean roles, or uses a different sketch decomposition. These cases motivate CSR as a complementary metric to IoU.

\findingbox{
Geometric-level similarity does not strongly reflect constraint-level correctness. IoU-only evaluation can therefore overestimate text-to-CAD generation quality, especially when a generated program produces a similar final shape through a structurally different construction process.
}

\begin{figure}
    \centering
    \includegraphics[width=\linewidth]{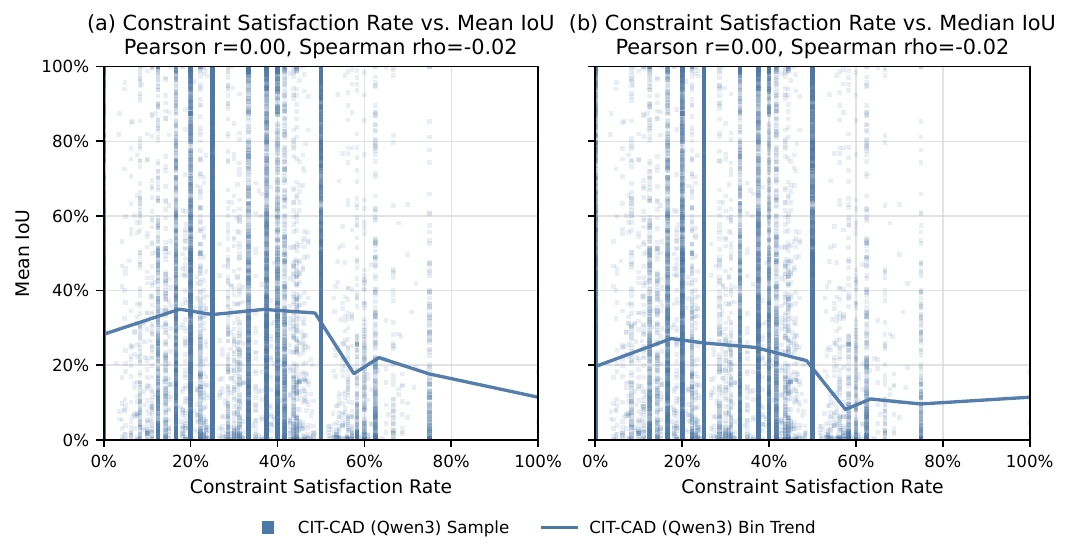}
    \caption{Correlation between geometric-level similarity and constraint-level correctness on Qwen3. Each point is one generated sample. The weak Pearson and Spearman correlations show that mean/median IoU and CSR capture different aspects of CAD program quality.}
    \label{fig:rq2-csr-iou-correlation}
\end{figure}

\begin{researchquestion}
How much does constraint-guided repair improve generated CAD programs?
\label{rq:repair-effectiveness}
\end{researchquestion}

CIT-CAD does not stop after the first code generation attempt. It validates the generated program against the extracted Constraint Intent Tree, converts violated constraints into localized feedback, and asks the same backbone LLM to repair the program. A repaired candidate is accepted only when it preserves locked constraints and reduces the number of remaining violations. If a candidate fails to introduce newly satisfied constraints or reduces no remaining violation, the repair loop terminates early for that sample. For stopped samples, the last accepted CSR is carried forward in later iterations, so each iteration is evaluated over the same fixed sample cohort.

\cref{fig:rq3-csr-distribution} shows the CSR distribution of Qwen3 across repair iterations. The average CSR increases monotonically from 18.2\% before repair to 21.6\%, 25.0\%, 28.4\%, and finally 28.9\% after iterative repair. This corresponds to a 10.7\% absolute improvement, or a 58.8\% relative improvement over the initial CSR. The distribution also shifts upward across iterations, indicating that localized constraint feedback improves not only a small subset of samples but the overall constraint-satisfaction profile.

The improvement is most pronounced in the early repair rounds. The first three repair iterations each bring clear gains, while the improvement from Iteration 3 to Iteration 4 becomes smaller. This suggests that constraint-guided repair quickly fixes many actionable violations, such as missing or mismatched structural constraints, but later iterations face harder cases where remaining errors are less easily resolved by textual feedback alone. The diminishing gain also supports the use of early stopping: once a repair candidate no longer reduces the violation set, continuing to modify the program is unlikely to provide substantial benefit and may risk unnecessary changes.

\begin{figure}[t]
\centering
\includegraphics[width=0.9\linewidth]{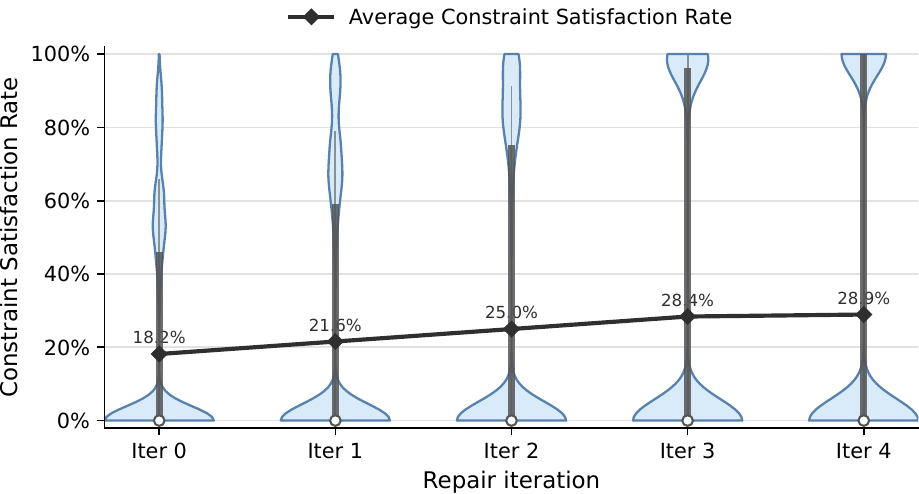}
\caption{Constraint Satisfaction Rate Distribution Across Repair Iterations of Qwen3. The violin plots show the CSR distribution over all evaluated samples at each repair iteration, while the line reports the average CSR.}
\label{fig:rq3-csr-distribution}
\end{figure}

\findingbox{
Constraint-guided repair turns CIT-CAD from a one-shot generator into an iterative verifier-generator. On Qwen3, the repair loop improves the average CSR from 18.2\% to 28.9\%, demonstrating that deterministic validation and localized feedback can progressively increase constraint satisfaction. The gains are strongest in the first few iterations and gradually saturate.
}

\begin{figure*}[t]
    \centering
    \includegraphics[width=\linewidth]{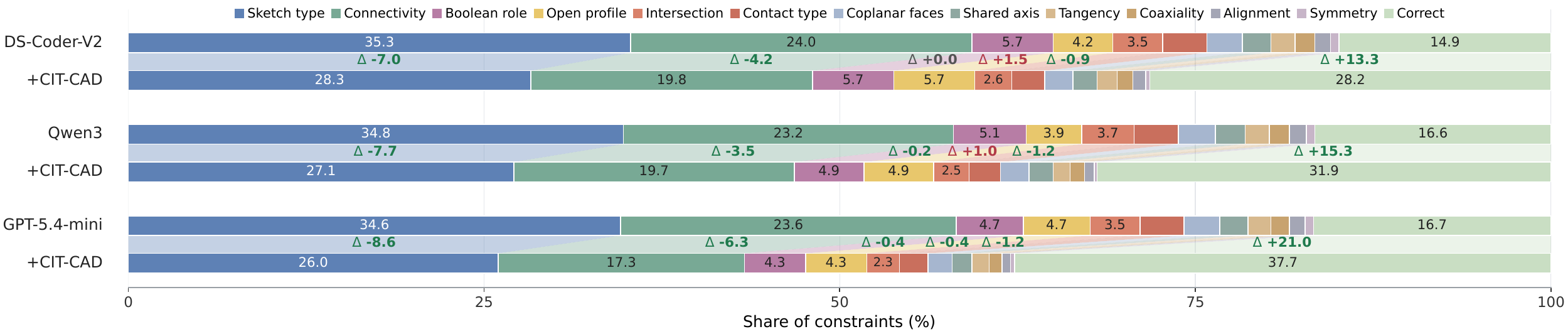}
    \caption{Normalized distribution of violated constraints. Each horizontal bar decomposes the number of violated constraints per 100 extracted constraints by constraint type. Lower total length indicates fewer construction-intent violations.}
    \label{fig:rq4-failure-types}
\end{figure*}

\begin{researchquestion}
What is the distribution of violated construction constraints?
\label{rq:failure-analysis}
\end{researchquestion}


We further analyze violated construction constraints to reveal program-level failure modes beyond final-shape IoU. 
We group violations by CIT field, such as sketch type, Boolean role, connectivity, contact, coplanarity, alignment, symmetry, and other geometric relations. 
Since one sample may violate multiple constraints, we report normalized counts as violations per 100 extracted constraints.

\cref{fig:rq4-failure-types} shows two main patterns. First, CIT-CAD reduces the total normalized violation count across all three backbones. The reduction is visible for DeepSeek-Coder-V2, Qwen3, and GPT-5.4-mini, indicating that explicit CIT scaffolding improves construction-intent preservation beyond a single model family. Second, the dominant remaining violations are sketch-level constraints, especially \lstinline|Sketch_type| and \lstinline|Is_connected|. This suggests that CIT-CAD is effective at imposing a global entity scaffold, but fine-grained local sketch reconstruction remains the hardest part of the task.

Relation-level violations, such as contact type, coplanarity, shared axis, tangency, coaxiality, alignment, and symmetry, account for smaller portions of the normalized violation counts. 
This does not imply that such relations are less important. 
Many relation checks depend on first generating the correct entities and sketch profiles. 
Thus, the main bottleneck is improving sketch-type grounding and local profile construction while preserving the CIT-level entity and relation scaffold.

\findingbox{
CIT-CAD reduces normalized constraint violations across all evaluated backbones, but it still fails most often on fine-grained sketch-level constraints. The remaining errors are dominated by sketch type and profile connectivity, while relation-level violations form a smaller but still meaningful portion of the error distribution. These results point to future improvements in fine-grained sketch extraction, parameter grounding, and relation-aware repair.
}

\begin{figure}[t]
    \centering
    \includegraphics[width=\linewidth]{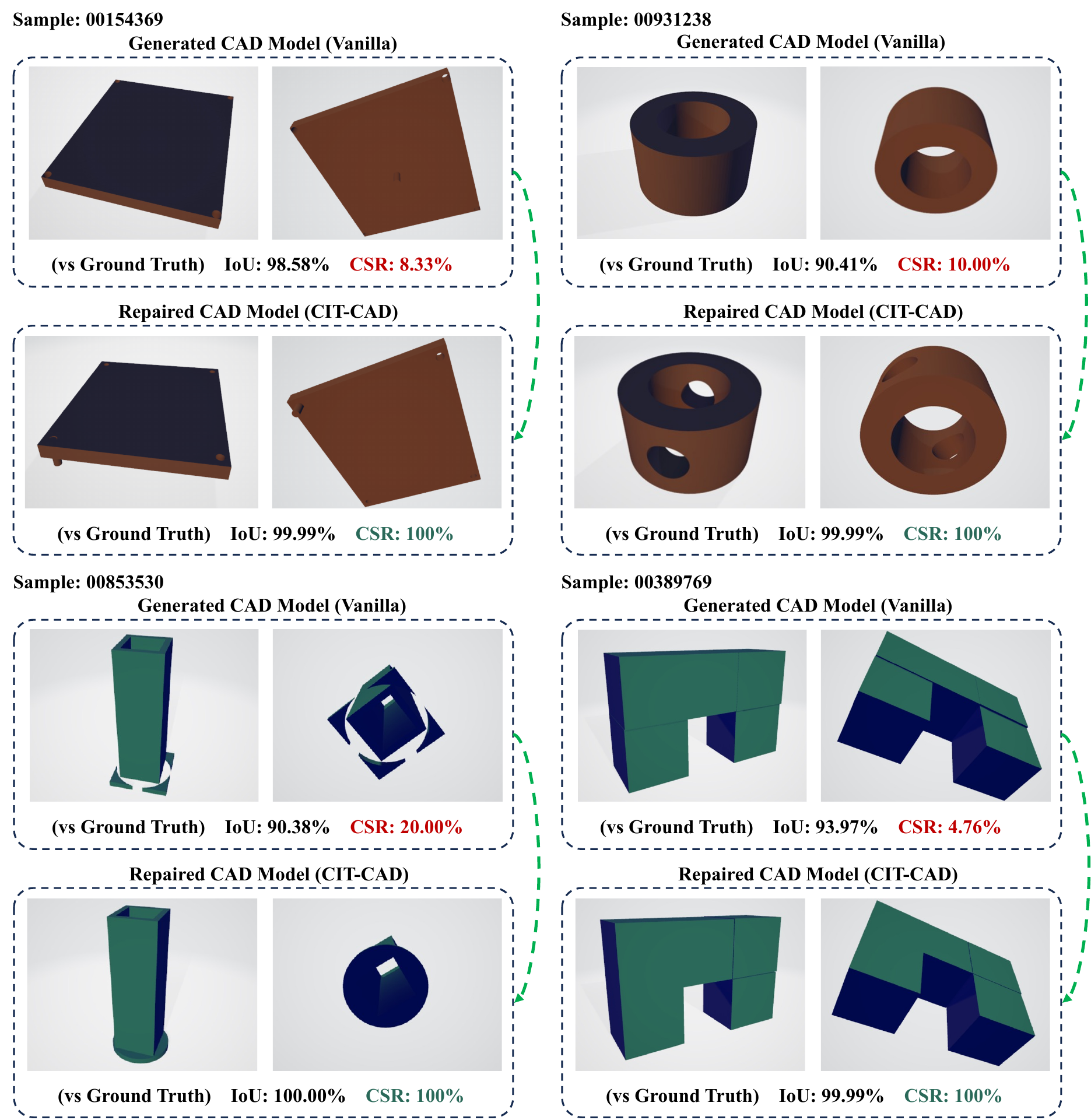}
    \caption{Qualitative case study of CIT-CAD on representative multi-entity CAD samples.}
    \label{fig:case-study}
\end{figure}

\subsection{Qualitative Analysis~(Case Study)}
\begin{researchquestion}
What benefits does CIT-CAD bring on typical multi-entity CAD cases?
\label{rq:case-study}
\end{researchquestion}

\cref{fig:case-study} provides qualitative evidence for the quantitative findings.
Although Vanilla generation can sometimes produce a visually plausible shape with high IoU, it may still fail to preserve the intended construction process.
For example, in Sample~00931238, the Vanilla model generates the main cylindrical body and the central hole, but misses the side through-hole.
This indicates that the model captures the coarse geometry while failing to execute an important subtractive construction step, leading to a low CSR of 10.00\% despite an IoU of 90.41\%.

CIT-CAD makes such errors explicit through the Constraint Intent Tree.
The omitted side hole is detected as a violated constraint related to a missing or incorrect subtractive entity.
This localized feedback provides direct repair guidance: instead of regenerating the whole model blindly, CIT-CAD can focus on correcting the violated construction role and inserting the required cut operation.
After repair, the model recovers the intended side-hole structure, improving IoU to 99.99\% and CSR to 100\%.

\findingbox{
The case study shows that CIT-CAD improves CAD generation by exposing construction-level errors that are not fully reflected by IoU.
Vanilla may produce a globally similar shape while omitting entities or confusing additive/subtractive roles.
By converting these errors into violated constraints, CIT-CAD provides localized and interpretable repair signals, leading to programs that better match the intended construction scaffold.
}

\section{Conclusion}
\label{sec:conclusion}

This paper presented CIT-CAD, a Constraint Intent Tree-based framework for natural-language CAD code generation and verification. The central idea is to make construction intent explicit before generating code: CIT-CAD infers a tree-structured representation of expected CAD entities, operations, sketch attributes, Boolean roles, and inter-entity relations, then uses this representation to guide CadQuery generation and deterministic validation. By comparing CIT-derived expected constraints with constraints extracted from generated programs, CIT-CAD provides a program-level view of CAD correctness that complements final-shape IoU.
Our experiments show that CIT-CAD improves the quality of generated CAD programs from both geometric-level and constraint-level across different LLM backbones. 

Future work will extend the constraint detector vocabulary to cover more fine-grained construction relations. 
We will also improve parameter-aware CIT inference and develop repair models that can better target local sketch and relation errors.

\clearpage

\bibliographystyle{ieeetr}
\bibliography{cad}

\clearpage

\twocolumn[
  \begin{@twocolumnfalse}
    \begin{center}
      \fontsize{18}{20}\selectfont\bfseries
          Supplementary Material \\ Constraint Intent Tree-based CAD code Generation and Validation
      \vspace{1em}
    \end{center}
  \end{@twocolumnfalse}
]

\setcounter{page}{1}

This appendix provides supplementary details for reproducing and interpreting CIT-CAD. It first reports the prompt templates used for CIT extraction, CIT-guided CAD code generation, and constraint-guided repair. It then formalizes the short-constraint validation procedure, describes the dataset preprocessing and parameter-augmentation pipeline, summarizes the experimental configuration, and discusses the main threats to validity.

\section{Prompt Templates}
\label{app:prompt-templates}

This appendix provides the inference-time prompt templates used by CIT-CAD. We report the prompt content at the level needed for reproduction while omitting private deployment details such as API keys, local endpoints, and machine-specific paths. CIT-CAD uses two prompt stages: an initial generation stage that first extracts the Constraint Intent Tree and then generates CadQuery code conditioned on the tree, and a repair stage that revises the generated code using localized constraint-validation feedback.

\subsection{Constraint Intent Tree Extraction}
\label{app:generation-prompts}

The initial generation stage contains two calls. The first call extracts a Constraint Intent Tree from the natural-language description. The second call generates executable CadQuery code from the description and the extracted tree.

\begin{tcolorbox}[
    breakable,
    colback=white,
    colframe=gray!40!black,
    title=CIT Extraction Prompt,
    fonttitle=\bfseries
]
\small
\raggedright
\textbf{System Prompt:} \\
\texttt{You extract a CAD structure tree from a natural-language CAD description.}

\textbf{Instruction:} \\
\texttt{Output requirements:} \\
\texttt{1. Return only valid JSON. Do not include Markdown fences.} \\
\texttt{2. Include the sample id, the original description, and a root node.} \\
\texttt{3. Use stable snake-case ids and expected\_variable names.} \\
\texttt{4. Create one child node for every physical solid or cutting tool.} \\
\texttt{5. For each entity, extract sketch\_type, is\_connected, is\_open\_profile, and boolean\_role when supported by the description.} \\
\texttt{6. For relations, extract intersection, contact\_type, has\_coplanar\_faces, and has\_shared\_axis when they are explicitly implied or strongly entailed.} \\
\texttt{7. Use null values for uncertain entity constraints rather than guessed geometry.} \\
\texttt{8. Use boolean\_role=true for additive solids and boolean\_role=false for subtractive cutting tools.} \\
\texttt{9. Use sketch\_type values from rect, circle, with\_arc, only\_line, and other.}

\textbf{User Input:} \\
\texttt{Sample id:} \{sample\_id\} \\
\texttt{Natural-language description:} \{description\}
\end{tcolorbox}

\subsection{CIT-Guided CAD Code Generation}

\begin{tcolorbox}[
    breakable,
    colback=white,
    colframe=gray!40!black,
    title=CIT-Guided CAD Code Generation Prompt,
    fonttitle=\bfseries
]
\small
\raggedright
\textbf{System Prompt:} \\
\texttt{You generate executable CadQuery Python code from a natural-language CAD description and a Constraint Intent Tree.}

\textbf{Instruction:} \\
\texttt{Output requirements:} \\
\texttt{1. Return only Python code. Do not include Markdown fences.} \\
\texttt{2. Import cadquery as cq.} \\
\texttt{3. Create one variable for every non-group node in the Constraint Intent Tree.} \\
\texttt{4. Use each node's expected\_variable exactly as the variable name.} \\
\texttt{5. Each entity variable should be produced by a Workplane sketch followed by .extrude(...).} \\
\texttt{6. Compose entities using .union(...), .cut(...), or .intersect(...) according to the intended construction.} \\
\texttt{7. Assign the final object to a variable named solid.} \\
\texttt{8. Prefer simple explicit dimensions when the description is underspecified.} \\
\texttt{9. Preserve the entity constraints and relation constraints in the Constraint Intent Tree as much as possible.} \\
\texttt{10. Do not call .close() after .rect(...), .circle(...), .ellipse(...), or helpers that already create closed wires.} \\
\texttt{11. Use .close() only after explicit open chains such as .moveTo(...).lineTo(...).lineTo(...).}

\textbf{User Input:} \\
\texttt{Natural-language description:} \{description\} \\
\texttt{Constraint Intent Tree:} \{tree\_json\}
\end{tcolorbox}

\subsection{Constraint-Guided Repair}
\label{app:repair-prompt}

The repair stage receives the natural-language description, the inferred CIT, the current generated code, and deterministic constraint-validation feedback. The feedback identifies violated entity or relation constraints, their localized CIT paths, and constraints that have already been satisfied and should be preserved.

\begin{tcolorbox}[
    breakable,
    colback=white,
    colframe=gray!40!black,
    title=Constraint-Guided Repair Prompt,
    fonttitle=\bfseries
]
\small
\raggedright
\textbf{System Prompt:} \\
\texttt{You repair CadQuery Python code using localized Constraint Intent Tree validation feedback.}

\textbf{Instruction:} \\
\texttt{Output requirements:} \\
\texttt{1. Return only the revised Python code. Do not include Markdown fences.} \\
\texttt{2. Keep the same intended object and preserve variable names from the Constraint Intent Tree.} \\
\texttt{3. Focus on the violated entity constraints, relation constraints, or subtrees listed in the feedback.} \\
\texttt{4. The final object must be assigned to a variable named solid.} \\
\texttt{5. Preserve constraints that were already satisfied in previous validation rounds.} \\
\texttt{6. Prefer localized edits to the violated entities, relations, or subtrees.} \\
\texttt{7. A candidate repair must preserve satisfied constraints and reduce the number of violated constraints.} \\
\texttt{8. Do not call .close() after .rect(...), .circle(...), .ellipse(...), or helpers that already create closed wires.} \\
\texttt{9. Use .close() only after explicit open chains such as .moveTo(...).lineTo(...).lineTo(...).} \\
\texttt{10. If feedback mentions ``Cannot convert object type ... Wire ... to vector'', remove the extra .close() before .extrude(...).}

\textbf{User Input:} \\
\texttt{Natural-language description:} \{description\} \\
\texttt{Constraint Intent Tree:} \{tree\_json\} \\
\texttt{Current code:} \{code\} \\
\texttt{Constraint-validation feedback:} \{feedback\_json\}
\end{tcolorbox}

\section{Constraint Extraction and Verification Details}
\label{app:constraint-verification}

CIT-CAD converts both expected and generated CAD artifacts into the same short-constraint representation before verification. The expected representation is produced from the CIT. For each entity node, the converter reads the serializable implementation keys \lstinline|sketch_type|, \lstinline|is_connected|, \lstinline|is_open_profile|, and \lstinline|boolean_role| when they are present and non-null. For each relation attached to a node, the converter keeps relation types in \{\lstinline|intersection|, \lstinline|contact_type|, \lstinline|has_coplanar_faces|, \lstinline|has_shared_axis|\}, sorts the two participating entity names, and stores a triple containing entities and value. A node map is also stored so each expected constraint can later be traced back to a CIT node path.

The generated program is analyzed by a deterministic CadQuery analyzer. Static AST analysis identifies variables assigned from \lstinline|.extrude(...)| expressions, recovers sketch construction calls, classifies local sketch fields, and infers Boolean roles from composition expressions such as \lstinline|union|, \lstinline|cut|, and \lstinline|intersect|. Runtime geometric analysis executes the generated program and inspects the resulting solids to extract relation constraints. The analyzer then strips metadata and emits the same short representation used for expected constraints.

\begin{algorithm}[t]
\caption{Short-constraint validation}
\label{alg:short-validation}
\KwIn{Expected constraints $C_T$, actual constraints $C_P$}
\KwOut{Validation result, satisfied IDs, violated IDs}
$S \leftarrow \emptyset$, $V \leftarrow \emptyset$\;
\ForEach{expected entity $n$ in $C_T$}{
    \If{$n$ is missing from $C_P$}{
        $V \leftarrow V \cup \{\mathrm{entity}:n:\mathrm{exists}\}$\;
        \textbf{continue}\;
    }
    \ForEach{expected field $f$ of $n$}{
        \If{$C_P[n][f] = C_T[n][f]$}{
            $S \leftarrow S \cup \{\mathrm{entity}:n:f\}$\;
        }
        \Else{
            $V \leftarrow V \cup \{\mathrm{entity}:n:f\}$\;
        }
    }
}
\ForEach{relation type $g$}{
    $A_g \leftarrow$ canonicalized actual relation triples of type $g$\;
    \ForEach{expected triple $(g,\mathbf{u},y)$}{
        \If{$(g,\operatorname{sort}(\mathbf{u}),y)\in A_g$}{
            $S \leftarrow S \cup \{\mathrm{relation}:g:\mathbf{u}:y\}$\;
        }
        \Else{
            $V \leftarrow V \cup \{\mathrm{relation}:g:\mathbf{u}:y\}$\;
        }
    }
}
\Return{$V=\emptyset$, $S$, $V$}\;
\end{algorithm}

Algorithm~\ref{alg:short-validation} mirrors the implemented validation procedure. Entity constraints are checked by exact field equality. Relation constraints are checked by membership in a canonicalized relation set. This intentionally avoids fuzzy matching: if the CIT expects a face-contact relation and the generated code produces no contact or a different contact type, the relation is counted as violated even if the final geometry has non-zero IoU with the reference model.

\section{Dataset Preprocessing}
\label{app:dataset-preprocessing}

The evaluation set is derived from Text2CAD~\cite{khan2024text2cad}. We first join the natural-language CSV and CadQuery CSV by normalized sample ID, remove samples with missing descriptions or missing CAD programs, and remove duplicate natural-language descriptions. This yields 151K valid text-to-CAD pairs, as reported in the main paper.

CIT-CAD is designed for construction-aware multi-entity generation. We therefore filter the valid pairs using an implementation-level entity parser. The parser counts explicit extruded construction entities in the reference CAD program and keeps only samples with at least two entities. To ensure that the natural-language description exposes the intended construction complexity, the preprocessing script also estimates the entity count mentioned in the description using numeric and ordinal expressions and keeps samples whose natural-language entity count matches the code entity count. Samples whose reference programs cannot be parsed by the entity parser are excluded from the final selected set. After preprocessing, the final evaluation subset contains 26,783 samples.

The natural-language inputs used by the final experiments include parameter augmentation. The augmentation script prompts an LLM to extract compact parameter summaries from the held-out reference CadQuery code, including visible dimensions, radii, heights, translations, rotations, entity counts, and Boolean composition. The summary is appended to the natural-language description before generation. This step addresses a limitation of the original descriptions, which often describe the object qualitatively but omit numeric parameters needed for executable CadQuery code. The reference code is not provided to the code-generation model at evaluation time; only the augmented natural-language description is used as input.

\section{Experimental Configuration}
\label{app:experimental-config}

All evaluated methods consume the same augmented natural-language descriptions. For each backbone LLM, the Vanilla baseline directly prompts the model to generate a standalone CAD program and does not use CITs, constraints, or repair. CIT-CAD runs the complete pipeline: CIT extraction, CIT-conditioned code generation, deterministic validation, localized repair, and monotonic candidate acceptance.

The open-source backbones are served through vLLM-compatible OpenAI-style chat endpoints. The Qwen setting uses Qwen3-30B-A3B-Instruct, and the DeepSeek setting uses DeepSeek-Coder-V2-Lite-Instruct in the reported experiments. GPT-5.4-mini is accessed through an OpenAI-compatible closed-source API. All LLM calls use chat-completion format with temperature 1.0 when the endpoint supports temperature; if an endpoint rejects the temperature argument, the implementation automatically retries without it. The maximum retry count is three. Dataset-level generation and repair are parallelized across samples, with 16 workers used by default and 32 workers used in some large closed-source runs.

The maximum number of repair iterations is controlled by \lstinline|--max-reflexion-iters|. The default implementation uses two repair iterations, while some extended GPT runs use five iterations. Existing generated samples are skipped unless explicit regeneration is requested. This skip policy is used to support long-running experiments and resume interrupted generations without overwriting completed samples. For geometry-level evaluation, generated and reference solids are normalized to the same voxel grid before IoU computation, and invalid or missing generated programs are counted as failures for validity and success-rate metrics.

\section{Threats to Validity}
\label{app:threats}

We discuss the main factors that may affect the interpretation and generalizability of our results. These threats mainly arise from the quality of the inferred CITs, the coverage of the deterministic constraint detectors, the construction-aware dataset filtering strategy, and the execution-based nature of CAD code evaluation.

\paragraph{CIT extraction quality.}
CIT-CAD treats the inferred CIT as the explicit design intent extracted from the natural-language input. If the extraction model misses an entity or infers an incorrect relation, the verifier will faithfully check the wrong intent. We reduce this risk by using a fixed output format, requiring null values for uncertain constraints, and evaluating across multiple LLM backbones, but the quality of CIT extraction remains an important source of variance.

\paragraph{Constraint detector coverage.}
CSR measures satisfaction of the constraints currently supported by the deterministic analyzer. The implementation covers entity-level sketch fields, Boolean role, intersection, contact type, coplanar faces, and shared axes. It does not yet cover all possible CAD design intents, such as precise symmetry, tolerances, manufacturing constraints, or arbitrary parametric dependencies. CSR should therefore be interpreted as construction-intent correctness under the implemented vocabulary, not as a complete proof of CAD equivalence.

\paragraph{Dataset and parameter augmentation.}
The final dataset is filtered to multi-entity samples whose natural-language entity count matches the code-derived entity count. This selection is appropriate for evaluating construction-aware generation, but it may underrepresent single-solid designs or descriptions with implicit entities. Parameter augmentation improves executability by making numeric details available in the natural-language input, but it also means the evaluated setting is closer to parameter-aware specification following than to generation from purely qualitative prompts.

\paragraph{Execution-based evaluation.}
Valid syntax rate, IoU, and CSR all depend on successful CadQuery execution and deterministic geometry analysis. Runtime failures are treated as invalid outputs and can affect both geometry-level and constraint-level metrics. This is intentional for CAD code generation, where executable code is the target artifact, but it may penalize models for small API-level errors even when the intended shape is partially recoverable.

\end{document}